\newcommand{\tup}[1]{{\langle #1 \rangle}}
\newcommand{\pre}{\mathsf{pre}}     
\newcommand{\eff}{\mathsf{eff}}     
\newcommand{\cond}{\mathsf{cond}}   
\newcommand{\strips}{\textsc{Strips}}     
\newtheorem{theorem}{Theorem}
\newtheorem{lemma}[theorem]{Lemma}
\begin{document}

\title{Learning \strips\ Action Models with Classical Planning}
\author{\#39}

\author{Diego Aineto\and Sergio Jim\'enez\and Eva Onaindia\\
{\small Departamento de Sistemas Inform\'aticos y Computaci\'on}\\
{\small Universitat Polit\`ecnica de Val\`encia.}\\
{\small Camino de Vera s/n. 46022 Valencia, Spain}\\
{\small \{dieaigar,serjice,onaindia\}@dsic.upv.es}}

\maketitle
\begin{abstract}
This paper presents a novel approach for learning \strips\ action models from examples that compiles this inductive learning task into a classical planning task. Interestingly, the compilation approach is flexible to different amounts of available input knowledge; the learning examples can range from a set of plans (with their corresponding initial and final states) to just a pair of initial and final states (no intermediate action or state is given). Moreover, the compilation accepts partially specified action models and it can be used to validate whether the observation of a plan execution follows a given \strips\ action model, even if this model is not fully specified.
\end{abstract}

\section{Introduction}
Besides {\em plan synthesis}~\cite{ghallab2004automated}, planning action models are also useful for {\em plan/goal recognition}~\cite{ramirez2012plan}. In both planning tasks, an automated planner is required to reason about action models that correctly and completely capture the possible world transitions~\cite{geffner:book:2013}. Unfortunately, building planning action models is complex, even for planning experts, and this knowledge acquisition task is a bottleneck that limits the potential of AI planning~\cite{kambhampati:modellite:AAAI2007}.

On the other hand, Machine Learning (ML) has shown to be able to compute a wide range of different kinds of models from examples~\cite{michalski2013machine}. The application of inductive ML to learning \strips\ action models, the vanilla action model for planning~\cite{fikes1971strips}, is not straightforward though:
\begin{itemize}
\item The {\em input} to ML algorithms (the learning/training data) is usually a finite set of vectors that represent the value of some fixed object features. The input for learning planning action models is, however, observations of plan executions (where each plan possibly has a different length).
\item The {\em output} of ML algorithms is usually a scalar value (an integer, in the case of {\em classification} tasks, or a real value, in the case of {\em regression} tasks). When learning action models the output is, for each action, the preconditions, negative and positive effects that define the possible state transitions.
\end{itemize}

Learning \strips\ action models is a well-studied problem with sophisticated algorithms such as {\sc ARMS} \cite{yang2007learning}, {\sc SLAF} \cite{amir:alearning:JAIR08} or {\sc LOCM} \cite{cresswell2013acquiring}, which do not require full knowledge of the intermediate states traversed by the example plans. Motivated by recent advances on the synthesis of different kinds of generative models with classical planning \cite{bonet2009automatic,segovia2016hierarchical,segovia2017generating}, this paper introduces an innovative planning compilation approach for learning \strips\ action models. The compilation approach is appealing by itself because it opens up the door to the bootstrapping of planning action models, but also because:

\begin{enumerate}
\item It is flexible to various amounts of input knowledge. Learning examples range from a set of plans (with their corresponding initial and final states) to just a pair of initial and final states where no intermediate state or action is observed.
\item It accepts previous knowledge about the structure of the actions in the form of partially specified action models. In the extreme, the compilation can validate whether an observed plan execution is valid for a given \strips\ action model, even if this model is not fully specified.
\end{enumerate}

The second section of the paper formalizes the classical planning model, its extension to {\em conditional effects} (a requirement of the proposed compilation) and the \strips\ action model (the output of the addressed learning task). The third section formalizes the task of learning action models with different amounts of available input knowledge. The fourth and fifth sections describe our compilation approach to tackle the formalized learning tasks. Finally, the last sections show the experimental evaluation, discuss the strengths and weaknesses of the compilation approach and propose several opportunities for future research.

\section{Background}
This section defines the planning model and the output of the learning tasks addressed in the paper.

\subsection{Classical planning with conditional effects}
Our approach to learning \strips\ action models is compiling this learning task into a classical planning task with conditional effects. Conditional effects allow us to compactly define actions whose effects depend on the current state. Supporting conditional effects is now a requirement of the IPC~\cite{vallati:IPC:AIM2015} and many classical planners cope with conditional effects without compiling them away.

We use $F$ to denote the set of {\em fluents} (propositional variables) describing a state. A {\em literal} $l$ is a valuation of a fluent $f\in F$; i.e. either~$l=f$ or $l=\neg f$. A set of literals $L$ represents a partial assignment of values to fluents (without loss of generality, we will assume that $L$ does not contain conflicting values). We use $\mathcal{L}(F)$ to denote the set of all literal sets on $F$; i.e.~all partial assignments of values to fluents.

A {\em state} $s$ is a full assignment of values to fluents; $|s|=|F|$, so the size of the state space is $2^{|F|}$. Explicitly including negative literals $\neg f$ in states simplifies subsequent definitions but often we will abuse of notation by defining a state $s$ only in terms of the fluents that are true in $s$, as it is common in \strips\ planning.

A {\em classical planning frame} is a tuple $\Phi=\tup{F,A}$, where $F$ is a set of fluents and $A$ is a set of actions. An action $a\in A$ is defined with {\em preconditions}, $\pre(a)\subseteq\mathcal{L}(F)$, {\em positive effects}, $\eff^+(a)\subseteq\mathcal{L}(F)$, and {\em negative effects} $\eff^-(a)\subseteq\mathcal{L}(F)$. We say that an action $a\in A$ is {\em applicable} in a state $s$ iff $\pre(a)\subseteq s$. The result of applying $a$ in $s$ is the {\em successor state} denoted by $\theta(s,a)=\{s\setminus\eff^-(a))\cup\eff^+(a)\}$.

An action $a\in A$ with conditional effects is defined as a set of {\em preconditions} $\pre(a)$ and a set of {\em conditional effects} $\cond(a)$. Each conditional effect $C\rhd E\in\cond(a)$ is composed of two sets of literals: $C \subseteq \mathcal{L}(F)$, the {\em condition}, and $E \subseteq \mathcal{L}(F)$, the {\em effect}. An action $a\in A$ is {\em applicable} in a state $s$ iff $\pre(a)\subseteq s$, and the {\em triggered effects} resulting from the action application are the effects whose conditions hold in $s$:
\[
triggered(s,a)=\bigcup_{C\rhd E\in\cond(a),C\subseteq s} E,
\]

The result of applying action $a$ in state $s$ is the {\em successor} state $\theta(s,a)=\{s\setminus\eff_c^-(s,a))\cup\eff_c^+(s,a)\}$ where $\eff_c^-(s,a)\subseteq triggered(s,a)$ and $\eff_c^+(s,a)\subseteq triggered(s,a)$ are, respectively, the triggered {\em negative} and {\em positive} effects.

A {\em classical planning problem} is a tuple $P=\tup{F,A,I,G}$, where $I$ is an initial state and $G\subseteq\mathcal{L}(F)$ is a goal condition. A {\em plan} for $P$ is an action sequence $\pi=\tup{a_1, \ldots, a_n}$ that induces the {\em state trajectory} $\tup{s_0, s_1, \ldots, s_n}$ such that $s_0=I$ and $a_i$ ({\small $1\leq i\leq n$}) is applicable in $s_{i-1}$ and generates the successor state $s_i=\theta(s_{i-1},a_i)$. The {\em plan length} is denoted with $|\pi|=n$ . A plan $\pi$ {\em solves} $P$ iff $G\subseteq s_n$; i.e.~if the goal condition is satisfied in the last state resulting from the application of the plan $\pi$ in the initial state $I$.

\subsection{\strips\ action schemas and {\em variable name} objects}
Our approach is aimed at learning PDDL action schemas that follow the \strips\ requirement~\cite{mcdermott1998pddl,fox2003pddl2}. Figure~\ref{fig:stack} shows the {\em stack} schema of a four-operator {\em blocksworld} domain~\cite{slaney2001blocks} encoded in PDDL.

\begin{figure}
\begin{footnotesize}
\begin{verbatim}
(:action stack
  :parameters (?v1 ?v2 - object)
  :precondition (and (holding ?v1) (clear ?v2))
  :effect (and (not (holding ?v1))
               (not (clear ?v2))
               (handempty) (clear ?v1)
               (on ?v1 ?v2)))
\end{verbatim}
\end{footnotesize}
 \caption{\small The {\em stack} operator schema of the {\em blocksworld} domain specified in PDDL.}
\label{fig:stack}
\end{figure}

To formalize the output of the learning task, we assume that fluents $F$ are instantiated from a set of {\em predicates} $\Psi$, as in PDDL. Each predicate $p\in\Psi$ has an argument list of arity $ar(p)$. Given a set of {\em objects} $\Omega$, the set of fluents $F$ is induced by assigning objects in $\Omega$ to the arguments of the predicates in $\Psi$; i.e.~$F=\{p(\omega):p\in\Psi,\omega\in\Omega^{ar(p)}\}$, where $\Omega^k$ is the $k$-th Cartesian power of $\Omega$.

Let $\Omega_v=\{v_i\}_{i=1}^{\operatorname*{max}_{a\in A} ar(a)}$ be a new set of objects denoted as {\em variable names} ($\Omega\cap\Omega_v=\emptyset$). $\Omega_v$ is bound to the maximum arity of an action in a given planning frame. For instance, in a three-block \emph{blocksworld}, $\Omega=\{block_1, block_2, block_3\}$ while $\Omega_v=\{v_1, v_2\}$ because the operators with the maximum arity, {\small\tt stack} and {\small\tt unstack}, have two parameters each.

Let $F_v$ be a new set of fluents, $F\cap F_v=\emptyset$, that results from instantiating the predicates in $\Psi$ using exclusively  objects of $\Omega_v$. $F_v$ defines the elements of the preconditions and effects of an action schema. For instance, in the \emph{blocksworld} domain, $F_v$={\small\tt\{handempty, holding($v_1$), holding($v_2$), clear($v_1$), clear($v_2$), ontable($v_1$), ontable($v_2$), on($v_1,v_1$), on($v_1,v_2$), on($v_2,v_1$), on($v_2,v_2$)\}}.

Finally, we assume that an action $a\in A$ is instantiated from a \strips\ operator schema $\xi=\tup{head(\xi),pre(\xi),add(\xi),del(\xi)}$ where:

\begin{itemize}
\item $head(\xi)=\tup{name(\xi),pars(\xi)}$ is the operator {\em header} defined by its name and the corresponding {\em variable names}, $pars(\xi)=\{v_i\}_{i=1}^{ar(\xi)}$. For instance, the headers of a four-operator \emph{blocksworld} domain are: {\small\tt pickup($v_1$), putdown($v_1$), stack($v_1,v_2$)} and {\small\tt unstack($v_1,v_2$)}.
\item $pre(\xi)\subseteq F_v$ is the set of preconditions,  $del(\xi)\subseteq F_v$ the negative effects and  $add(\xi)\subseteq F_v$ the positive effects such that $del(\xi)\subseteq pre(\xi)$, $del(\xi)\cap add(\xi)=\emptyset$ and $pre(\xi)\cap add(\xi)=\emptyset$.
\end{itemize}

\section{Learning \strips\ action models}

Learning \strips\ action models from fully available input knowledge, i.e. from plans where the {\em pre-} and {\em post-states} of every action in the plans are known, is straightforward. When intermediate states are available, operator schemas are derived lifting the literals that change between the pre and post-state of each action execution. Preconditions of an action are derived lifting the minimal set of literals that appears in all the pre-states of the corresponding action~\cite{jimenez2012review}.

This section formalizes more challenging learning tasks, where less input knowledge is available:

\subsubsection{Learning from (initial, final) state pairs.} This learning task amounts to observing agents acting in the world but watching only the result of their plans execution. No intermediate information about the actions in the plans is given. This learning task is formalized as $\Lambda=\tup{\Psi,\Sigma}$:
\begin{itemize}
\item $\Psi$ is the set of predicates that define the abstract state space of a given planning domain.
\item $\Sigma=\{\sigma_1,\ldots,\sigma_{\tau}\}$ is a set of $(initial, final)$ state pairs called {\em labels}. Each label $\sigma_t=(s_0^t,s_{n}^t)$, {\tt\small $1\leq t\leq \tau$}, comprises the {\em final} state $s_{n}^t$ resulting from executing an unknown plan $\pi_t=\tup{a_1^t, \ldots, a_n^t}$ in the {\em initial} state $s_0^t$.
\end{itemize}

\subsubsection{Learning from labeled plans.}
We augment the input knowledge with the actions executed by the observed agent and define the learning task $\Lambda'=\tup{\Psi,\Sigma,\Pi}$:

\begin{itemize}
\item $\Pi=\{\pi_1,\ldots,\pi_{\tau}\}$ is a given set of example plans where $\pi_t=\tup{a_1^t, \ldots, a_n^t}$, {\small $1\leq t\leq \tau$}, is an action sequence that induces the corresponding state sequence $\tup{s_0^t, s_1^t, \ldots, s_n^t}$ such that $a_i^t$, {\small $1\leq i\leq n$}, is applicable in $s_{i-1}^t$ and generates $s_i^t=\theta(s_{i-1}^t,a_i^t)$.
\end{itemize}

Figure~\ref{fig:lexample} shows an example of a learning task $\Lambda'$ of the {\em blocksworld} domain. This task has a single learning example, $\Pi=\{\pi_1\}$ and $\Sigma=\{\sigma_1\}$, that corresponds to observing the execution of an eight-action plan $(|\pi_1|=8)$ for inverting a four-block tower.

\subsubsection{Learning from partially specified action models.}
In case that partially specified operator schemas are available, we can incorporate this information within the learning task. The new leaning task is defined as $\Lambda''=\tup{\Psi,\Sigma,\Pi,\Xi_0}$:

\begin{itemize}
\item $\Xi_0$ is a partially specified action model in which some preconditions and effects are known a priori.
\end{itemize}

\vspace{0.1cm}

A solution to $\Lambda$ is a set of operator schemas $\Xi$ that is compliant with the predicates in $\Psi$ and the set of initial and final states $\Sigma$. In a $\Lambda$ learning scenario, a solution must not only determine a possible \strips\ action model but also the plans $\pi_t$, {\tt\small $1\leq t\leq \tau$}, that explain the given labels $\Sigma$ using the learned model. A solution to $\Lambda'$ is a set of \strips\ operator schemas $\Xi$ (one schema $\xi=\tup{head(\xi),pre(\xi),add(\xi),del(\xi)}$ for each action that has a different name in the example plans $\Pi$) that is compliant with the predicates $\Psi$, the example plans $\Pi$, and their corresponding labels $\Sigma$.  Finally, a solution to $\Lambda''$ is a set of \strips\ operator schemas $\Xi$ that is also compliant with the provided partially specified action model $\Xi_0$.

\begin{figure}
{\tt ;;; Predicates in $\Psi$}
\begin{footnotesize}
\begin{verbatim}
(handempty) (holding ?o  - object)
(clear ?o - object) (ontable ?o - object)
(on ?o1 - object ?o2 - object)
\end{verbatim}
\end{footnotesize}

\vspace{0.2cm}

\begin{subfigure}{.25\textwidth}
{\tt ;;; Plan $\pi_1$}
\begin{footnotesize}
\begin{verbatim}
0: (unstack A B)
1: (putdown A)
2: (unstack B C)
3: (stack B A)
4: (unstack C D)
5: (stack C B)
6: (pickup D)
7: (stack D C)
\end{verbatim}
\end{footnotesize}
\end{subfigure}%
\begin{subfigure}{.6\textwidth}
{\tt ;;; Label $\sigma_1=(s_0^1,s_{n}^1)$}
\begin{lstlisting}[mathescape]
\end{lstlisting}
\vspace{0.1cm}
\begin{tikzpicture}[node distance = 0mm, block/.style args = {#1,#2}{fill=#1,text width=#2,shape=square}]
\node (initD) [draw]{D};
\node (initC) [draw, above=of initD.north]{C};
\node (initB) [draw, above=of initC.north]{B};
\node (initA) [draw, above=of initB.north]{A};
\draw[thick] (-1,-0.25) -- (2.5,-0.25);

\node (goalA) [draw, right=10mm of initD]{A};
\node (goalB) [draw, right=10mm of initC]{B};
\node (goalC) [draw, right=10mm of initB]{C};
\node (goalD) [draw, right=10mm of initA]{D};
\end{tikzpicture}
\vspace{0.6cm}
\end{subfigure}%
 \caption{\small Learning task of the \emph{blocksworld} domain from a single labeled plan.}
\label{fig:lexample}
\end{figure}

\section{Learning \strips\ action models with planning}
In our approach, a learning task $\Lambda$, $\Lambda'$ or $\Lambda''$ is solved by compiling it into a classical planning task with conditional effects. The intuition behind the compilation is that a solution to the resulting classical planning task is a sequence of actions that:

\begin{enumerate}
\item {\em Programs the \strips\ action model $\Xi$}. A solution plan has a {\em prefix} that, for each $\xi\in\Xi$, determines the fluents from $F_v$ that belong to $pre(\xi)$, $del(\xi)$ and $add(\xi)$.
\item {\em Validates the programmed \strips\ action model $\Xi$ in the given input knowledge} (the labels $\Sigma$ and $\Pi$, and/or $\Xi_0$ if available). For every label $\sigma_t\in \Sigma$, a solution plan has a postfix that produces a final state $s_{n}^t$ using the programmed action model $\Xi$ in the corresponding initial state $s_0^t$. This process is the validation of the programmed action model $\Xi$ with the set of learning examples {\small $1\leq t\leq \tau$}. 
\end{enumerate}

To formalize our compilation we first define a set of classical planning instances $P_t=\tup{F,\emptyset,I_t,G_t}$ that belong to the same planning frame (i.e. same fluents and actions but different initial states and goals). Fluents $F$ are built instantiating the predicates in $\Psi$ with the objects of the input labels $\Sigma$. Formally, $\Omega=\bigcup_{\small 1\leq t\leq \tau} obj(s_0^t)$, where $obj$ is a function that returns the objects that appear in a fully specified state. The set of actions, $A=\emptyset$, is empty because the action model is initially unknown. Finally, the initial state $I_t$ is given by the state $s_0^t\in \sigma_t$, and the goals $G_t$ are defined by the state $s_n^t\in \sigma_t$.

We can now formalize the compilation approach. We start with $\Lambda$ as it requires the least input knowledge. Given a learning task $\Lambda=\tup{\Psi,\Sigma}$, the compilation outputs a classical planning task $P_{\Lambda}=\tup{F_{\Lambda},A_{\Lambda},I_{\Lambda},G_{\Lambda}}$:

\begin{itemize}
\item $F_{\Lambda}$ extends $F$ with:
\begin{itemize}
\item Fluents $pre_f(\xi)$, $del_f(\xi)$ and $add_f(\xi)$, for every $f\in F_v$ and $\xi \in \Xi$ that represent the programmed action model. If a fluent of $pre_f(\xi)/del_f(\xi)/add_f(\xi)$ holds, it means that $f$ is a precondition/negative effect/positive effect of the operator schema $\xi\in \Xi$. For instance, the preconditions of the $stack$ schema (Figure~\ref{fig:stack}) are represented by fluents {\small\tt pre\_holding\_stack\_$v_1$} and {\small\tt pre\_clear\_stack\_$v_2$}.
\item A fluent $mode_{prog}$ to indicate whether the operator schemas are being programmed or validated (when already programmed)
\item Fluents $\{test_t\}_{1\leq t\leq \tau}$ which represent the examples where the action model will be validated.
\end{itemize}

\item $I_{\Lambda}$ contains the fluents from $F$ that encode $s_0^1$ (the initial state of the first label), the fluents in every $pre_f(\xi)\in F_{\Lambda}$ and the fluent $mode_{prog}$ set to true. Our compilation assumes that any operator schema is initially programmed with every possible precondition (the most specific learning hypothesis), no negative effect and no positive effect.
\item $G_{\Lambda}=\bigcup_{1\leq t\leq \tau}\{test_t\}$ indicates that the programmed action model is validated in all the learning examples.
\item $A_{\Lambda}$ contains actions of three kinds:
\begin{enumerate}
\item Actions for {\em programming} an operator schema $\xi\in\Xi$:
\begin{itemize}
\item Actions for {\bf removing} a {\em precondition} $f\in F_v$ from $\xi$.

\begin{small}
\begin{align*}
\hspace*{7pt}\pre(\mathsf{programPre_{f,\xi}})=&\{\neg del_{f}(\xi),\neg add_{f}(\xi),\\
& mode_{prog}, pre_{f}(\xi)\},\\
\cond(\mathsf{programPre_{f,\xi}})=&\{\emptyset\}\rhd\{\neg pre_{f}(\xi)\}.
\end{align*}
\end{small}

\item Actions for {\bf adding} a {\em negative} or {\em positive} effect $f\in F_v$ to $\xi$.

\begin{small}
\begin{align*}
\hspace*{7pt}\pre(\mathsf{programEff_{f,\xi}})=&\{\neg del_{f}(\xi),\neg add_{f}(\xi),\\
& mode_{prog}\},\\
\cond(\mathsf{programEff_{f,\xi}})=&\{pre_{f}(\xi)\}\rhd\{del_{f}(\xi)\},\\
&\{\neg pre_{f}(\xi)\}\rhd\{add_{f}(\xi)\}.
\end{align*}
\end{small}
\end{itemize}

\item Actions for {\em applying} an already programmed operator schema $\xi\in\Xi$ bound to the objects $\omega\subseteq\Omega^{ar(\xi)}$. We assume operators headers are known so the binding of $\xi$ is done implicitly by order of appearance of the action parameters, i.e. variables $pars(\xi)$ are bound to the objects in $\omega$ that appear in the same position. Figure~\ref{fig:compilation} shows the PDDL encoding of the action for applying a programmed operator $stack$.
\begin{small}
\begin{align*}
\hspace*{7pt}\pre(\mathsf{apply_{\xi,\omega}})=&\{pre_{f}(\xi)\implies p(\omega)\}_{\forall p\in\Psi,f=p(pars(\xi))},\\
\cond(\mathsf{apply_{\xi,\omega}})=&\{del_{f}(\xi)\}\rhd\{\neg p(\omega)\}_{\forall p\in\Psi,f=p(pars(\xi))},\\
&\{add_{f}(\xi)\}\rhd\{p(\omega)\}_{\forall p\in\Psi,f=p(pars(\xi))},\\
&\{mode_{prog}\}\rhd\{\neg mode_{prog}\}.
\end{align*}
\end{small}

\item Actions for {\em validating} the learning example {\tt\small $1\leq t\leq \tau$}.
\begin{small}
\begin{align*}
\hspace*{7pt}\pre(\mathsf{validate_{t}})=&G_t\cup\{test_j\}_{1\leq j<t}\\
&\cup\{\neg test_j\}_{t\leq j\leq \tau}\cup \{\neg mode_{prog}\},\\
\cond(\mathsf{validate_{t}})=&\{\emptyset\}\rhd\{test_t\} \cup \{\neg f\}_{\forall f\in G_t, f \notin I_{t+1}}\\
&\cup \{f\}_{\forall f\in I_{t+1}, f \notin G_t}.
\end{align*}
\end{small}
\end{enumerate}
\end{itemize}

\begin{lemma}
A classical plan $\pi$ that solves $P_{\Lambda}$ induces an action model $\Xi$ that solves the learning task $\Lambda$.
\end{lemma}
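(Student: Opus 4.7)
The plan is to extract from a solution plan $\pi$ both a candidate action model $\Xi$ and, for each label $\sigma_t\in\Sigma$, a concrete plan $\pi_t$ that witnesses how $\Xi$ explains that label, and then verify the \strips\ well-formedness conditions on $\Xi$.

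First I would establish the canonical structure of any solution $\pi$. The fluent $mode_{prog}$ is true in $I_\Lambda$ and can only be flipped to false by the conditional effect $\{mode_{prog}\}\rhd\{\neg mode_{prog}\}$ of an $\mathsf{apply}_{\xi,\omega}$ action. Once false, both $\mathsf{programPre}$ and $\mathsf{programEff}$ become inapplicable. Consequently $\pi$ splits as $\pi=\pi_{prog}\cdot\pi_{exec}$, where $\pi_{prog}$ consists solely of programming actions and $\pi_{exec}$ consists solely of $\mathsf{apply}$ and $\mathsf{validate}$ actions. The chain of preconditions $\{test_j\}_{j<t}\cup\{\neg test_j\}_{j\geq t}$ on $\mathsf{validate}_t$ forces the validate actions inside $\pi_{exec}$ to occur in the order $\mathsf{validate}_1,\ldots,\mathsf{validate}_\tau$, so $\pi_{exec}$ further decomposes as $\rho_1\cdot\mathsf{validate}_1\cdot\rho_2\cdot\mathsf{validate}_2\cdots\rho_\tau\cdot\mathsf{validate}_\tau$, where each $\rho_t$ is a (possibly empty) sequence of $\mathsf{apply}$ actions.

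Next I would read off $\Xi$ from the state $s^\star$ reached immediately after $\pi_{prog}$: for every schema $\xi$ and every $f\in F_v$, declare $f\in pre(\xi)$, $f\in del(\xi)$, $f\in add(\xi)$ iff $pre_f(\xi)$, $del_f(\xi)$, $add_f(\xi)$ respectively hold in $s^\star$. Since apply and validate actions do not touch the programming fluents, $s^\star$ determines $\Xi$ for the whole of $\pi_{exec}$. To show $\Xi$ is a valid \strips\ model I would check, by induction on the prefix of $\pi_{prog}$, the invariants $del_f(\xi)\Rightarrow pre_f(\xi)$, $del_f(\xi)\wedge add_f(\xi)\Rightarrow\bot$, and $pre_f(\xi)\wedge add_f(\xi)\Rightarrow\bot$. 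They hold in $I_\Lambda$ (every $pre_f(\xi)$ true, every $del_f(\xi)$ and $add_f(\xi)$ false) and are preserved by each programming action thanks to the guards $\neg del_f(\xi)$ and $\neg add_f(\xi)$ together with the conditional effects that place $f$ in $del$ exactly when $f$ is still a precondition and in $add$ exactly when it is not.

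Finally I would show that $\rho_t$, reinterpreted as a ground action sequence under $\Xi$, is a plan from $s_0^t$ to $s_n^t$. The conditional effects of $\mathsf{apply}_{\xi,\omega}$ read $pre_f(\xi)$, $del_f(\xi)$ and $add_f(\xi)$ from $s^\star$, so the $F$-projection of its effect on any state coincides with the transition induced by the ground instance of $\xi$ under $\Xi$; also $\mathsf{apply}_{\xi,\omega}$ is applicable only when every precondition of the ground instance holds. The conditional effects of $\mathsf{validate}_{t}$ reset the $F$-fluents from $s_n^t$ to $s_0^{t+1}$ without altering the programming fluents or the earlier $test_j$, so the state entering $\rho_{t+1}$ restricted to $F$ equals $s_0^{t+1}$. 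Since $\mathsf{validate}_t$ requires $G_t=s_n^t$ to hold and both $G_t$ and the initial $F$-state before $\rho_t$ are full assignments, $\rho_t$ must drive $s_0^t$ exactly to $s_n^t$ under $\Xi$, and the conjunction of all $\{test_t\}$ goals is achieved. The main obstacle is this last bookkeeping step: one has to be careful that the explicit use of negative literals in states (so that $G_t$ is a full assignment rather than a subset) and the reset encoded by the symmetric difference $\{\neg f\}_{f\in G_t\setminus I_{t+1}}\cup\{f\}_{f\in I_{t+1}\setminus G_t}$ interact correctly, and that none of the three action kinds can covertly modify the programming fluents or the $test$ flags outside their intended semantics.
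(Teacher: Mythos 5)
Your proposal is correct and follows essentially the same route as the paper, which only gives a brief sketch: the $mode_{prog}$ fluent separates the plan into a programming prefix and a validation suffix, and the $test_t$ goals can only be reached by driving each $s_0^t$ to $s_n^t$ with the programmed schemas. Your elaboration adds two details the sketch glosses over --- the inductive verification that the programming actions preserve the well-formedness invariants $del(\xi)\subseteq pre(\xi)$, $del(\xi)\cap add(\xi)=\emptyset$, $pre(\xi)\cap add(\xi)=\emptyset$, and the bookkeeping that $G_t$ must be read as a full assignment for the $\mathsf{validate}_t$ reset to be sound --- both of which are genuinely needed for a complete argument.
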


\begin{proof}[Proof sketch]
\begin{small}
Once operator schemas $\Xi$ are programmed, they can only be applied and validated according to the $mode_{prog}$ fluent. To solve $P_{\Lambda}$, goals $\{test_t\}$, {\small $1\leq t\leq \tau$} can only be achieved by executing an applicable sequence of programmed operator schemas that reaches the final state $s_n^t$, defined in $\sigma_t$, starting from $s_0^t$. If this is achieved for all the input examples {\small $1\leq t\leq \tau$}, it means that the programmed action model $\Xi$ is compliant with the provided input knowledge and hence it is a solution to $\Lambda$.
\end{small}
\end{proof}

The compilation is {\em complete} in the sense that it does not discard any possible \strips\ action model. The size of the classical planning task $P_{\Lambda}$ depends on:
\begin{itemize}
\item The arity of the actions headers in $\Xi$ and the predicates $\Psi$ of the learning task. The larger the arity, the larger the $F_v$ set, which in turn defines the size of the fluent sets $pre_f(\xi)/del_f(\xi)/add_f(\xi)$ and the corresponding set of {\em programming} actions.
\item The number of learning examples. The larger this number, the more $test_t$ fluents and $\mathsf{validate_{t}}$ actions in $P_{\Lambda}$.
\end{itemize}

\begin{figure}
\begin{scriptsize}
\begin{verbatim}
(:action apply_stack
  :parameters (?o1 - object ?o2 - object)
  :precondition
   (and (or (not (pre_on_stack_v1_v1)) (on ?o1 ?o1))
        (or (not (pre_on_stack_v1_v2)) (on ?o1 ?o2))
        (or (not (pre_on_stack_v2_v1)) (on ?o2 ?o1))
        (or (not (pre_on_stack_v2_v2)) (on ?o2 ?o2))
        (or (not (pre_ontable_stack_v1)) (ontable ?o1))
        (or (not (pre_ontable_stack_v2)) (ontable ?o2))
        (or (not (pre_clear_stack_v1)) (clear ?o1))
        (or (not (pre_clear_stack_v2)) (clear ?o2))
        (or (not (pre_holding_stack_v1)) (holding ?o1))
        (or (not (pre_holding_stack_v2)) (holding ?o2))
        (or (not (pre_handempty_stack)) (handempty)))
  :effect
   (and (when (del_on_stack_v1_v1) (not (on ?o1 ?o1)))
        (when (del_on_stack_v1_v2) (not (on ?o1 ?o2)))
        (when (del_on_stack_v2_v1) (not (on ?o2 ?o1)))
        (when (del_on_stack_v2_v2) (not (on ?o2 ?o2)))
        (when (del_ontable_stack_v1) (not (ontable ?o1)))
        (when (del_ontable_stack_v2) (not (ontable ?o2)))
        (when (del_clear_stack_v1) (not (clear ?o1)))
        (when (del_clear_stack_v2) (not (clear ?o2)))
        (when (del_holding_stack_v1) (not (holding ?o1)))
        (when (del_holding_stack_v2) (not (holding ?o2)))
        (when (del_handempty_stack) (not (handempty)))
        (when (add_on_stack_v1_v1) (on ?o1 ?o1))
        (when (add_on_stack_v1_v2) (on ?o1 ?o2))
        (when (add_on_stack_v2_v1) (on ?o2 ?o1))
        (when (add_on_stack_v2_v2) (on ?o2 ?o2))
        (when (add_ontable_stack_v1) (ontable ?o1))
        (when (add_ontable_stack_v2) (ontable ?o2))
        (when (add_clear_stack_v1) (clear ?o1))
        (when (add_clear_stack_v2) (clear ?o2))
        (when (add_holding_stack_v1) (holding ?o1))
        (when (add_holding_stack_v2) (holding ?o2))
        (when (add_handempty_stack) (handempty))
        (when (modeProg) (not (modeProg)))))
\end{verbatim}
\end{scriptsize}
 \caption{\small PDDL action for applying an already programmed schema $stack$ (implications coded as disjunctions).}
\label{fig:compilation}
\end{figure}

\section{Constraining the learning hypothesis space with additional input knowledge}
\label{sec:Constraining}
In this section, we show that further input knowledge can be used to constrain the space of possible action models and to make the learning task more practicable.

\subsection{Labeled plans}
We extend the compilation to consider labeled plans. Given a learning task $\Lambda'=\tup{\Psi,\Sigma,\Pi}$, the compilation outputs a classical planning task $P_{\Lambda'}=\tup{F_{\Lambda'},A_{\Lambda'},I_{\Lambda'},G_{\Lambda'}}$:
\begin{itemize}
\item $F_{\Lambda'}$ extends $F_{\Lambda}$ with $F_{\Pi}=\{plan(name(\xi),\Omega^{ar(\xi)},j)\}$, the fluents to code the steps of the plans in $\Pi$, where $F_{\pi_t}\subseteq F_{\Pi}$ encodes $\pi_t\in \Pi$. Fluents $at_j$ and $next_{j,j+1}$, {\small $1\leq j< n$}, are also added to represent the current plan step and to iterate through the steps of a plan.
\item $I_{\Lambda'}$ extends $I_{\Lambda}$ with fluents $F_{\pi_1}$ plus fluents $at_1$ and $\{next_{j,j+1}\}$, {\small $1\leq j<n$}, to indicate the plan step where the action model is validated. As in the original compilation, $G_{\Lambda'}=G_{\Lambda}=\bigcup_{1\leq t\leq \tau}\{test_t\}$.
\item With respect to $A_{\Lambda'}$.
\begin{enumerate}
\item The actions for {\em programming} the preconditions/effects of a given operator schema $\xi\in\Xi$ are the same.
\item The actions for {\em applying} an already programmed operator have an extra precondition $f\in F_{\Pi}$ that encodes the current plan step, and extra conditional effects $\{at_{j}\}\rhd\{\neg at_{j},at_{j+1}\}_{\forall j\in [1,n]}$ for advancing to the next plan step. With this mechanism we ensure that these actions are applied in the same order as in the example plans.
\item The actions for {\em validating} the current learning example have an extra precondition, $at_{|\pi_t|}$, to indicate that the current plan $\pi_t$ is fully executed and extra conditional effects to remove plan $\pi_{t}$ and initiate the next plan $\pi_{t+1}$:
\begin{small}
\begin{align*}
&\{\emptyset\}\rhd\{\neg at_{|\pi_t|},at_1\} \cup \{\neg f\}_{f\in F_{\pi_t}} \cup \{f\}_{f\in F_{\pi_t+1}}.
\end{align*}
\end{small}
\end{enumerate}
\end{itemize}

\subsection{Partially specified action models}
The known preconditions and effects of a partially specified action model are encoded as fluents $pre_f(\xi)$, $del_f(\xi)$ and $add_f(\xi)$ set to true in the initial state $I_{\Lambda'}$. The programming actions, $\mathsf{programPre_{f,\xi}}$ and $\mathsf{programEff_{f,\xi}}$, become now unnecessary and they are removed from $A_{\Lambda'}$, thus making the planning task $P_{\Lambda'}$ be easier to solve.

To illustrate this, the plan of Figure~\ref{fig:plan} is a solution to a learning task $\Lambda''=\tup{\Psi,\Sigma,\Pi,\Xi_0}$ for acquiring the {\em blocksworld} action model where operator schemas for {\tt\small pickup}, {\tt\small putdown} and {\tt\small unstack} are specified in $\Xi_0$. This plan programs and validates the operator schema {\tt\small stack} from {\em blocksworld} using the plan $\pi_1$ and label $\sigma_1$ shown in Figure~\ref{fig:lexample}. Plan steps $[0,8]$ program the preconditions of the {\tt\small stack} operator, steps $[9,13]$ program the operator effects and steps $[14,22]$ validate the programmed operators following the plan $\pi_1$ shown in Figure~\ref{fig:lexample}.

In the extreme, when a fully specified \strips\ action model is given in $\Xi_0$, the compilation validates whether an observed plan follows the given model. In this case, if a solution plan is found for $P_{\Lambda'}$, it means that the given action model is {\em valid} for the provided examples. If $P_{\Lambda'}$ is unsolvable then it means that the action model is invalid because it is not compliant with all the given examples. Tools for plan validation like VAL~\cite{howey2004val} could also be used at this point.

\begin{figure}
{\footnotesize\tt
     {\bf 00} : (program\_pre\_clear\_stack\_v1)\\
     01 : (program\_pre\_handempty\_stack)\\
     02 : (program\_pre\_holding\_stack\_v2)\\
     03 : (program\_pre\_on\_stack\_v1\_v1)\\
     04 : (program\_pre\_on\_stack\_v1\_v2)\\
     05 : (program\_pre\_on\_stack\_v2\_v1)\\
     06 : (program\_pre\_on\_stack\_v2\_v2)\\
     07 : (program\_pre\_ontable\_stack\_v1)\\
     08 : (program\_pre\_ontable\_stack\_v2)\\
     {\bf 09} : (program\_eff\_clear\_stack\_v1)\\
    10 : (program\_eff\_clear\_stack\_v2)\\
    11 : (program\_eff\_handempty\_stack)\\
    12 : (program\_eff\_holding\_stack\_v1)\\
    13 : (program\_eff\_on\_stack\_v1\_v2)\\
    {\bf 14} : (apply\_unstack a b i1 i2)\\
    15 : (apply\_putdown a i2 i3)\\
    16 : (apply\_unstack b c i3 i4)\\
    17 : (apply\_stack b a i4 i5)\\
    18 : (apply\_unstack c d i5 i6)\\
    19 : (apply\_stack c b i6 i7)\\
    20 : (apply\_pickup d i7 i8)\\
    21 : (apply\_stack d c i8 i9)\\
    {\bf 22} : (validate\_1)
}
 \caption{\small Plan for programming and validating the $stack$ schema using plan $\pi_1$ and label $\sigma_1$ (shown in Figure~\ref{fig:lexample}) as well as previously specified operator schemas for $pickup$, $putdown$ and $unstack$.}
\label{fig:plan}
\end{figure}

\subsection{Static predicates}
A {\em static predicate} $p \in \Psi$ is a predicate that does not appear in the effects of any action~\cite{fox:TIM:JAIR1998}. Therefore, one can get rid of the mechanism for programming these predicates in the effects of any action schema while keeping the compilation complete. Given a static predicate $p$:
\begin{itemize}
\item Fluents $del_f(\xi)$ and $add_f(\xi)$, such that $f\in F_v$ is an instantiation of the static predicate $p$ in the set of {\em variable names} $\Omega_v$, can be discarded for every $\xi\in\Xi$.
\item Actions $\mathsf{programEff_{f,\xi}}$ (s.t. $f\in F_v$ is an instantiation of $p$ in $\Omega_v$) can also be discarded for every $\xi\in\Xi$.
\end{itemize}

Static predicates can also constrain the space of possible preconditions by looking at the given set of labels $\Sigma$. One can assume that if a precondition $f\in F_v$ (s.t. $f\in F_v$ is an instantiation of a static predicate in $\Omega_v$) is not compliant with the labels in $\Sigma$ then fluents $pre_f(\xi)$ and actions $\mathsf{programPre_{f,\xi}}$ can be discarded for every $\xi\in\Xi$. For instance, in the {\em zenotravel} domain, $pre\_next\_board\_v1\_v1$, $pre\_next\_debark\_v1\_v1$, $pre\_next\_fly\_v1\_v1$, $pre\_next\_zoom\_v1\_v1$, $pre\_next\_refuel\_v1\_v1$ can be discarded (and their corresponding programming actions) because a precondition {\tt\small(next ?v1 ?v1 - flevel)} will never hold in any state of $\Sigma$.

On the other hand, fluents $pre_f(\xi)$ and actions $\mathsf{programPre_{f,\xi}}$ are discardable for every $\xi\in\Xi$ if a precondition $f\in F_v$ (s.t. $f\in F_v$ is an instantiation of a static predicate in $\Omega_v$) is not possible according to $\Pi$. Back to the {\em zenotravel} domain, if an example plan $\pi_t\in \Pi$ contains the action {\tt\small (fly plane1 city2 city0 fl3 fl2)} and the corresponding label $\sigma_t\in\Sigma$ contains the static literal {\tt\small (next fl2 fl3)} but does not contain {\tt\small (next fl2 fl2)}, {\tt\small (next fl3 fl3)} or {\tt\small (next fl3 fl2)}, the only possible precondition that would include the static predicate is $pre\_next\_fly\_v5\_v4$.


\section{Evaluation}
This section evaluates the performance of our approach for learning \strips\ action models with different amounts of available input knowledge.

\subsubsection{Setup.}
The domains used in the evaluation are IPC domains that satisfy the \strips\ requirement~\cite{fox2003pddl2}, taken from the {\sc planning.domains} repository~\cite{muise2016planning}. We only used 5 learning examples for each domain and we fixed the examples for all the experiments so that we can evaluate the impact of the input knowledge in the quality of the learned models. All experiments are run on an Intel Core i5 3.10 GHz x 4 with 8 GB of RAM.

The classical planner we used to solve the instances that result from our compilations is {\sc Madagascar}~\cite{rintanen2014madagascar}. We used {\sc Madagascar} due to its ability to deal with planning instances populated with dead-ends. In addition, SAT-based planners can apply the actions for programming preconditions in a single planning step (in parallel) because these actions do not interact. Actions for programming action effects can also be applied in a single planning step reducing significantly the planning horizon.


\subsubsection{Metrics.}
The quality of the learned models is measured with the {\em precision} and {\em recall} metrics. These two metrics are frequently used in {\em pattern recognition}, {\em information retrieval} and {\em binary classification} and are more informative that simply counting the number of errors in the learned model or computing the {\em symmetric difference} between the learned and the reference model~\cite{davis2006relationship}.

Intuitively, precision gives a notion of {\em soundness} while recall gives a notion of the {\em completeness} of the learned models. Formally, $Precision=\frac{tp}{tp+fp}$, where $tp$ is the number of true positives (predicates that correctly appear in the action model) and $fp$ is the number of false positives (predicates appear in the learned action model that should not appear). Recall is formally defined as $Recall=\frac{tp}{tp+fn}$ where $fn$ is the number of false negatives (predicates that should appear in the learned action model but are missing).

Given the syntax-based nature of these metrics, it may happen that they report low scores for learned models that are actually good but correspond to {\em reformulations} of the actual model; i.e. a learned model semantically equivalent but syntactically different to the reference model. This mainly occurs when the learning task is under-constrained.

\subsection{Learning from labeled plans}
We start evaluating our approach with tasks $\Lambda'=\tup{\Psi,\Sigma,\Pi}$, where {\em labeled plans} are available. We then repeat the evaluation but exploiting potential \emph{static predicates} computed from $\Sigma$, which are the predicates that appear  unaltered in the initial and final states in every $\sigma_t\in\Sigma$. Static predicates are used to constrain the space of possible action models as explained in the previous section.

Table~\ref{tab:results_plans} shows the obtained results. Precision ({\bf P}) and recall ({\bf R}) are computed separately for the preconditions ({\bf Pre}), positive effects ({\bf Add}) and negative Effects ({\bf Del}), while the last two columns of each setting and the last row report averages values. We can observe that identifying static predicates leads to models with better precondition {\em recall}. This fact evidences that many of the missing preconditions corresponded to static predicates because there is no incentive to learn them as they always hold~\cite{gregory2015domain}.

Table~\ref{tab:time_plans} reports the total planning time, the preprocessing time (in seconds) invested by {\sc Madagascar} to solve the planning instances that result from our compilation as well as the number of actions of the solution plans. All the learning tasks are solved in a few seconds. Interestingly, one can identify the domains with static predicates by just looking at the reported plan length. In these domains some of the preconditions that correspond to static predicates are directly derived from the learning examples and therefore fewer programming actions are required. When static predicates are identified, the resulting compilation is also much more compact and produces smaller planning/instantiation times.

\begin{table*}
		\resizebox{\textwidth}{!}{%
		\begin{tabular}{l|l|l|l|l|l|l||l|l||l|l|l|l|l|l||l|l|}
& \multicolumn{8}{|c||}{\bf No Static}& \multicolumn{8}{|c|}{\bf Static}\\\cline{2-17}
& \multicolumn{2}{|c|}{\bf Pre} & \multicolumn{2}{|c|}{\bf Add} & \multicolumn{2}{|c|}{\bf Del} & \multicolumn{2}{|c||}{\bf}& \multicolumn{2}{|c|}{\bf Pre} & \multicolumn{2}{|c|}{\bf Add} & \multicolumn{2}{|c|}{\bf Del} & \multicolumn{2}{|c|}{\bf}\\ 			
			  & \multicolumn{1}{|c|}{\bf P} & \multicolumn{1}{|c|}{\bf R} & \multicolumn{1}{|c|}{\bf P} & \multicolumn{1}{|c|}{\bf R} & \multicolumn{1}{|c|}{\bf P} & \multicolumn{1}{|c|}{\bf R} &  \multicolumn{1}{|c|}{\bf P} & \multicolumn{1}{|c||}{\bf R}& \multicolumn{1}{|c|}{\bf P} & \multicolumn{1}{|c|}{\bf R} & \multicolumn{1}{|c|}{\bf P} & \multicolumn{1}{|c|}{\bf R} & \multicolumn{1}{|c|}{\bf P} & \multicolumn{1}{|c|}{\bf R} &  \multicolumn{1}{|c|}{\bf P} & \multicolumn{1}{|c|}{\bf R} \\
                          \hline
			Blocks & 1.0 & 1.0 & 1.0 & 1.0 & 1.0 & 1.0 & 1.0 & 1.0 & 1.0 & 1.0 & 1.0 & 1.0 & 1.0 & 1.0 & 1.0 & 1.0\\
			Driverlog & 1.0 & 0.36 & 0.75 & 0.86 & 1.0 & 0.71 & 0.92 & 0.64 & 0.9 & 0.64 & 0.56 & 0.71 & 0.86 & 0.86 & 0.78 & 0.73\\
			Ferry & 1.0 & 0.57 & 1.0 & 1.0 & 1.0 & 1.0 & 1.0 & 0.86 & 1.0 & 0.57 & 1.0 & 1.0 & 1.0 & 1.0 & 1.0 & 0.86\\
			Floortile & 0.52 & 0.68 & 0.64 & 0.82 & 0.83 & 0.91 & 0.66 & 0.80 & 0.68 & 0.68 & 0.89 & 0.73 & 1.0 & 0.82 & 0.86 & 0.74\\
			Grid & 0.62 & 0.47 & 0.75 & 0.86 & 0.78 & 1.0 & 0.71 & 0.78 & 0.79 & 0.65 & 1.0 & 0.86 & 0.88 & 1.0 & 0.89 & 0.83 \\
			Gripper & 1.0 & 0.67 & 1.0 & 1.0 & 1.0 & 1.0 & 1.0 & 0.89 & 1.0 & 0.67 & 1.0 & 1.0 & 1.0 & 1.0 & 1.0 & 0.89\\
			Hanoi & 1.0 & 0.50 & 1.0 & 1.0 & 1.0 & 1.0 & 1.0 & 0.83 & 0.75 & 0.75 & 1.0 & 1.0 & 1.0 & 1.0 & 0.92 & 0.92\\
			Miconic & 0.75 & 0.33 & 0.50 & 0.50 & 0.75 & 1.0 & 0.67 & 0.61 & 0.89 & 0.89 & 1.0 & 0.75 & 0.75 & 1.0 & 0.88 & 0.88\\
			Satellite & 0.60 & 0.21 & 1.0 & 1.0 & 1.0 & 0.75 & 0.87 & 0.65 & 0.82 & 0.64 & 1.0 & 1.0 & 1.0 & 0.75 & 0.94 & 0.80\\
			Transport & 1.0 & 0.40 & 1.0 & 1.0 & 1.0 & 0.80 & 1.0 & 0.73 & 1.0 & 0.70 & 0.83 & 1.0 & 1.0 & 0.80 & 0.94 & 0.83\\
			Visitall & 1.0 & 0.50 & 1.0 & 1.0 & 1.0 & 1.0 & 1.0 & 0.83 & 1.0 & 1.0 & 1.0 & 1.0 & 1.0 & 1.0 & 1.0 & 1.0\\
			Zenotravel & 1.0 & 0.36 & 1.0 & 1.0 & 1.0 & 0.71 & 1.0 & 0.69 &1.0 & 0.64 & 0.88 & 1.0 & 1.0 & 0.71 & 0.96 & 0.79\\
			\hline
			\bf  & 0.88 & 0.50 & 0.88 & 0.92 & 0.95 & 0.91 & 0.90 & 0.78 & 0.90 & 0.74 & 0.93 & 0.92 & 0.96 & 0.91 & 0.93 & 0.86\\
		\end{tabular}
	}
\caption{\small {\em Precision} and {\em recall} scores for learning tasks from labeled plans without (left) and with (right) static predicates.}
\label{tab:results_plans}
\end{table*}

\begin{table}
\begin{scriptsize}
	\begin{center}
		\begin{tabular}{l|c|c|c||c|c|c|}
                         & \multicolumn{3}{|c||}{\bf No Static}& \multicolumn{3}{|c|}{\bf Static}\\
			 & Total & Preprocess & Length  & Total & Preprocess &  Length\\
                         \hline
			Blocks & 0.04 & 0.00 & 72  & 0.03 & 0.00 & 72 \\
			Driverlog & 0.14 & 0.09 & 83 & 0.06 & 0.03 & 59 \\
			Ferry & 0.06 & 0.03 & 55 & 0.06 & 0.03 & 55 \\
			Floortile & 2.42 & 1.64 & 168 & 0.67 & 0.57 & 77 \\
			Grid & 4.82 & 4.75 & 88 & 3.39 & 3.35 & 72 \\
			Gripper & 0.03 & 0.01 & 43 & 0.01 & 0.00 & 43 \\
                        Hanoi & 0.12 & 0.06 & 48 & 0.09 & 0.06 & 39 \\
                        Miconic & 0.06 & 0.03 & 57 & 0.04 & 0.00 & 41 \\
			Satellite & 0.20 & 0.14 & 67 & 0.18 & 0.12 & 60 \\
			Transport & 0.59 & 0.53 & 61 & 0.39 & 0.35 & 48 \\
			Visitall & 0.21 & 0.15 & 40 & 0.17 & 0.15 & 36 \\
			Zenotravel & 2.07 & 2.04 & 71 & 1.01 & 1.00 & 55 \\			
		\end{tabular}
	\end{center}
        \end{scriptsize}
	\caption{\small Total planning time, preprocessing time and plan length for learning tasks from labeled plans without/with static predicates.}
	\label{tab:time_plans}	
\end{table}

\subsection{Learning from partially specified action models}

We evaluate now the ability of our approach to support partially specified action models; that is, addressing learning tasks of the kind $\Lambda''=\tup{\Psi,\Sigma,\Pi,\Xi_0}$. In this experiment, the model of half of the actions is given in $\Xi_0$ as an extra input of the learning task.

Tables~\ref{tab:results_plans_partial} and~\ref{tab:time_plans_partial} summarize the obtained results, which include the identification of static predicates. We only report the {\em precision} and {\em recall} of the {\em unknown} actions since the values of the metrics of the {\em known} action models is 1.0. In this experiment, a low value of {\em precision} or {\em recall} has a greater impact than in the corresponding $\Lambda'$ tasks because the evaluation is done only over half of the actions. This occurs, for instance, in the precondition \emph{recall} of domains such as {\em Floortile}, {\em Gripper} or {\em Satellite}.

Remarkably, the overall \emph{precision} is now $0.98$, which means that the contents of the learned models is highly reliable. The value of \emph{recall}, 0.87, is an indication that the learned models still miss some information (preconditions are again the component more difficult to be fully learned). Overall, the results confirm the previous trend: the more input knowledge of the task, the better the models and the less planning time. Additionally, the solution plans required for this task are smaller because it is only necessary to program half of the actions (the other half are included in the input knowledge $\Xi_0$). {\em Visitall} and {\em Hanoi} are excluded from this evaluation because they only contain one action schema.


\subsection{Learning from (initial,final) state pairs}
Finally, we evaluate our approach when input plans are not available and thereby the planner must not only compute the action models but also the plans that satisfy the input labels. Table~\ref{tab:results_states} and ~\ref{tab:time_states} summarize the results obtained for the task $\Lambda=\tup{\Psi,\Sigma,\Xi_0}$ using static predicates. Values for the {\em Zenotravel} and {\em Grid} domains are not reported because {\sc Madagascar} was not able to solve the corresponding planning tasks within a 1000 sec. time bound. The values of \emph{precision} and \emph{recall} are significantly lower than in Table ~\ref{tab:results_plans}. Given that the learning hypothesis space is now fairly under-constrained, actions can be reformulated and still be compliant with the inputs (e.g. the {\em blocksworld} operator {\small\tt stack} can be {\em learned} with the preconditions and effects of the {\small\tt unstack} operator and vice versa). We tried to minimize this effect with the additional input knowledge (static predicates and partially specified action models) and yet the results are below the scores obtained when learning from labeled plans.

\section{Related work}



Action model learning has also been studied in domains where there is partial or missing state observability. {\sf ARMS} works when no partial intermediate state is given. It defines a set of weighted constraints that must hold for the plans to be correct, and solves the weighted propositional satisfiability problem with a MAX-SAT solver~\cite{yang2007learning}. In order to efficiently solve the large MAX-SAT representations, {\sf ARMS} implements a hill-climbing method that models the actions approximately. In contrast to our model comparison validation which aims at covering all the training examples, ARMS maximizes the number of covered examples from a testing set.

{\sc SLAF} also deals with partial observability~\cite{amir:alearning:JAIR08}. Given a formula representing the initial belief state, a sequence of executed actions and the corresponding partially observed states, it builds a complete explanation of observations by models of actions through a CNF formula. The learning algorithm updates the formula of the belief state with every action and observation in the sequence and thus the final returned formula includes all consistent models. SLAF assesses the quality of the learned models with respect to the actual generative model.



{\sf LOCM} only requires the example plans as input without need for providing information about predicates or states~\cite{cresswell2013acquiring,cresswell2011generalised}. The lack of available information is overcome by exploiting assumptions about the kind of domain model it has to generate. Particularly, it assumes a domain consists of a collection of objects (sorts) whose defined set of states can be captured by a parameterized Finite State Machine. {\sf LOP} ({\sf LOCM} with Optimized Plans ~\cite{gregory2015domain})  incorporates static predicates and applies a post-processing step after the {\sf LOCM} analysis that requires a set of optimal plans to be used in the learning phase. This is done to mitigate the limitation of {\sf LOCM} of inducing similar models for domains with similar structures. {\sf LOP} compares the learned models with the corresponding reference model.

Compiling an action model learning task into classical planning is a general and flexible approach that allows to accommodate various amounts and kinds of input knowledge and opens up a path for addressing further learning and validation tasks. For instance, the example plans in $\Pi$ can be replaced or complemented by a set $\mathcal{O}$ of sequences of observations (i.e., fully or partial state observations with noisy or missing fluents~\cite{SohrabiRU16}), and learning tasks of the kind $\Lambda=\tup{\Psi,\Sigma,\mathcal{O},\Xi_0}$ would also be attainable. Furthermore, our approach seems extensible to learning other types of generative models (e.g. hierarchical models like HTN or behaviour trees) that can be more appealing than \strips\ models since they require less search effort to compute a a planning solution.


\begin{table}
\begin{footnotesize}
	\begin{center}
		\resizebox{\columnwidth}{!}{%
		\begin{tabular}{l|l|l|l|l|l|l||l|l|}
			 & \multicolumn{2}{|c|}{\bf Pre} & \multicolumn{2}{|c|}{\bf Add} & \multicolumn{2}{|c||}{\bf Del} & \multicolumn{2}{|c}{\bf}\\ \cline{2-9}			
			  & \multicolumn{1}{|c|}{\bf P} & \multicolumn{1}{|c|}{\bf R} & \multicolumn{1}{|c|}{\bf P} & \multicolumn{1}{|c|}{\bf R} & \multicolumn{1}{|c|}{\bf P} & \multicolumn{1}{|c||}{\bf R} &  \multicolumn{1}{|c|}{\bf P} & \multicolumn{1}{|c|}{\bf R} \\
			\hline
				Blocks & 1.0 & 1.0 & 1.0 & 1.0 & 1.0 & 1.0 & 1.0 & 1.0 \\
				Driverlog & 1.0 & 0.71 & 1.0 & 1.0 & 1.0 & 1.0 & 1.0 & 0.90 \\
				Ferry & 1.0 & 0.67 & 1.0 & 1.0 & 1.0 & 1.0 & 1.0 & 0.89 \\
				Floortile & 0.75 & 0.60 & 1.0 & 0.80 & 1.0 & 0.80 & 0.92 & 0.73 \\
                Grid & 1.0 & 0.67 & 1.0 & 1.0 & 1.0 & 1.0 & 0.84 & 0.78 \\
				Gripper & 1.0 & 0.50 & 1.0 & 1.0 & 1.0 & 1.0 & 1.0 & 0.83 \\
				Miconic & 1.0 & 1.0 & 1.0 & 1.0 & 1.0 & 1.0 & 1.0 & 1.0 \\
				Satellite & 1.0 & 0.57 & 1.0 & 1.0 & 1.0 & 1.0 & 1.0 & 0.86 \\
				Transport & 1.0 & 0.75 & 1.0 & 1.0 & 1.0 & 1.0 & 1.0 & 0.92 \\
				Zenotravel & 1.0 & 0.67 & 1.0 & 1.0 & 1.0 & 0.67 & 1.0 & 0.78 \\
				\hline
				\bf  & 0.98 & 0.71 & 1.0 & 0.98 & 1.0 & 0.95 & 0.98 & 0.87 \\
			\end{tabular}
		}
	\end{center}
\end{footnotesize}
\caption{\small {\em Precision} and {\em recall} scores for learning tasks with partially specified action models.}
\label{tab:results_plans_partial}
\end{table}

\begin{table}
\begin{footnotesize}
	\begin{center}
		\begin{tabular}{l|c|c|c|}			
			 & Total time & Preprocess & Plan length  \\
                         \hline
			Blocks & 0.07 & 0.01 & 54  \\
			Driverlog & 0.03 & 0.01 & 40 \\
			Ferry & 0.06 & 0.03 & 45 \\
			Floortile & 0.43 & 0.42 & 55 \\
                        Grid & 3.12 & 3.07 & 53 \\
			Gripper & 0.03 & 0.01 & 35 \\
			Miconic & 0.03 & 0.01 & 34  \\
			Satellite & 0.14 & 0.14 & 47 \\
			Transport & 0.23 & 0.21 & 37 \\
			Zenotravel & 0.90 & 0.89 & 40 \\
		\end{tabular}
	\end{center}
        \end{footnotesize}
	\caption{\small Time and plan length learning for learning tasks with partially specified action models.}
	\label{tab:time_plans_partial}	
\end{table}

\begin{table}
\begin{footnotesize}
	\begin{center}
		\resizebox{\columnwidth}{!}{%
		\begin{tabular}{l|l|l|l|l|l|l||l|l|}
			 & \multicolumn{2}{|c|}{\bf Pre} & \multicolumn{2}{|c|}{\bf Add} & \multicolumn{2}{|c||}{\bf Del} & \multicolumn{2}{|c}{\bf}\\ \cline{2-9}			
			  & \multicolumn{1}{|c|}{\bf P} & \multicolumn{1}{|c|}{\bf R} & \multicolumn{1}{|c|}{\bf P} & \multicolumn{1}{|c|}{\bf R} & \multicolumn{1}{|c|}{\bf P} & \multicolumn{1}{|c||}{\bf R} &  \multicolumn{1}{|c|}{\bf P} & \multicolumn{1}{|c|}{\bf R} \\
			\hline
            Blocks & 0.33 & 0.33 & 0.75 & 0.50 & 0.33 & 0.33 & 0.47 & 0.39 \\
            Driverlog & 1.0 & 0.29 & 0.33 & 0.67 & 1.0 & 0.50 & 0.78 & 0.48 \\
            Ferry & 1.0 & 0.67 & 0.50 & 1.0 & 1.0 & 1.0 & 0.83 & 0.89 \\
            Floortile & 0.67 & 0.40 & 0.50 & 0.40 & 1.0 & 0.40 & 0.72 & 0.40 \\
            Grid & - & - & - & - & - & - & - & - \\
            Gripper & 1.0 & 0.50 & 1.0 & 1.0 & 1.0 & 1.0 & 1.0 & 0.83 \\
            Miconic & 0.0 & 0.0 & 0.33 & 0.50 & 0.0 & 0.0 & 0.11 & 0.17 \\
            Satellite & 1.0 & 0.14 & 0.67 & 1.0 & 1.0 & 1.0 & 0.89 & 0.71 \\
            Transport & 0.0 & 0.0 & 0.25 & 0.5 & 0.0 & 0.0 & 0.08 & 0.17 \\
            Zenotravel & - & - & - & - & - & - & - & - \\
            \hline
            & 0.63 & 0.29 & 0.54 & 0.70 & 0.67 & 0.53 & 0.61 & 0.51 \\			
			\end{tabular}
		}
	\end{center}
\end{footnotesize}
\caption{\small {\em Precision} and {\em recall} scores for learning from (initial,final) state pairs.}
\label{tab:results_states}
\end{table}

\begin{table}
\begin{footnotesize}
	\begin{center}
		\begin{tabular}{l|c|c|c|}			
			 & Total time & Preprocess & Plan length  \\
			\hline
            Blocks & 2.14 & 0.00 & 58  \\
            Driverlog & 0.09 & 0.00 & 88 \\
            Ferry & 0.17 & 0.01 & 65 \\
            Floortile & 6.42 & 0.15 & 126 \\
            Grid & - & - & - \\
            Gripper & 0.03 & 0.00 & 47 \\
            Miconic & 0.04 & 0.00 & 68 \\
            Satellite & 4.34 & 0.10 & 126 \\
            Transport & 2.57 & 0.21 & 47 \\			
            Zenotravel & - & - & - \\
		\end{tabular}
	\end{center}
        \end{footnotesize}
	\caption{\small Time and plan length when learning from (initial,final) state pairs.}
	\label{tab:time_states}	
\end{table}

\section{Conclusions}
We presented a novel approach for learning \strips\ action models from examples using classical planning. The approach is flexible to various amounts of input knowledge and accepts partially specified action models. We also introduced the {\em precision} and {\em recall} metrics, widely used in ML, for evaluating the learned action models. These two metrics measure the soundness and completeness of the learned models and facilitate the identification of model flaws.

To the best of our knowledge, this is the first work on learning action models that is exhaustively evaluated over a wide range of domains and uses exclusively an {\em off-the-shelf} classical planner. The work in~\cite{SternJ17} proposes a planning compilation for learning action models from plan traces following the {\em finite domain} representation for the state variables. This is a theoretical study on the boundaries of the learned models and no experimental results are reported.

When example plans are available, we can compute accurate action models from small sets of learning examples (five examples per domain) in little computation time (less than a second). When action plans are not available, our approach still produces action models that are compliant with the input information. In this case, since learning is not constrained by actions, operators can be reformulated changing their semantics, in which case the comparison with a reference model turns out to be tricky.



Generating {\em informative} examples for learning planning action models is still an open issue. Planning actions include preconditions that are only satisfied by specific sequences of actions which have low probability of being chosen by chance~\cite{fern2004learning}. The success of recent algorithms for exploring planning tasks~\cite{FrancesRLG17} motivates the development of novel techniques that enable to autonomously collect informative learning examples. The combination of such exploration techniques with our learning approach is an appealing research direction that opens up the door to the bootstrapping of planning action models.

In many applications, the actual actions executed by the observed agent are not available but, instead, the resulting states can be observed. We plan to extend our approach for learning from state observations as it broadens the range of application to external observers and facilitates the representation of imperfect observability, as shown in plan recognition \cite{SohrabiRU16}, as well as learning from unstructured data, like state images \cite{AsaiF18}.

\subsection*{Acknowledgments}
This work is supported by the Spanish MINECO project TIN2017-88476-C2-1-R. Diego Aineto is partially supported by the {\it FPU16/03184} and Sergio Jim\'enez by the {\it RYC15/18009}, both programs funded by the Spanish government.

\bibliographystyle{aaai}
\bibliography{planlearnbibliography}
\end{document}